\begin{document}

\title{Budget Constrained Bidding by Model-free Reinforcement Learning in Display Advertising}

\author{Di Wu, Xiujun Chen, Xun Yang, Hao Wang, Qing Tan, Xiaoxun Zhang, Jian Xu, Kun Gai}
 \affiliation{%
   \institution{Alibaba Group}
   \city{Beijing}
   \country{P.R.China}
 }
 \email{{di.wudi,xiujun.cxj,vincent.yx,wh111044,qing.tan,xiaoxun.zhang,xiyu.xj,jingshi.gk}@alibaba-inc.com}

%






\copyrightyear{2018} 
\acmYear{2018} 
\setcopyright{acmlicensed}
\acmConference[CIKM '18]{The 27th ACM International Conference on Information and Knowledge Management}{October 22--26, 2018}{Torino, Italy}
\acmBooktitle{The 27th ACM International Conference on Information and Knowledge Management (CIKM '18), October 22--26, 2018, Torino, Italy}
\acmPrice{15.00}
\acmDOI{10.1145/3269206.3271748}
\acmISBN{978-1-4503-6014-2/18/10}

\begin{abstract}
Real-time bidding (RTB) is an important mechanism in online display advertising, where a proper bid for each page view plays an essential role for good marketing results. 
Budget constrained bidding is a typical scenario in RTB where the advertisers hope to maximize the total value of the winning impressions under a pre-set budget constraint.
However, the optimal bidding strategy is hard to be derived due to the complexity and volatility of the auction environment. 
To address these challenges, in this paper, we formulate budget constrained bidding as a Markov Decision Process and propose a model-free reinforcement learning framework to resolve the optimization problem. Our analysis shows that the immediate reward from environment is misleading under a critical resource constraint. Therefore, we innovate a reward function design methodology for the reinforcement learning problems with constraints. Based on the new reward design, we employ a deep neural network to learn the appropriate reward so that the optimal policy can be learned effectively. Different from the prior model-based work, which suffers from the scalability problem, our framework is easy to be deployed in large-scale industrial applications. The experimental evaluations demonstrate the effectiveness of our framework on large-scale real datasets.
\end{abstract}

%
%
\begin{CCSXML}
<ccs2012>
<concept>
<concept_id>10002951.10003227.10003447</concept_id>
<concept_desc>Information systems~Computational advertising</concept_desc>
<concept_significance>500</concept_significance>
</concept>
<concept>
<concept_id>10003752.10010070.10010071.10010261</concept_id>
<concept_desc>Theory of computation~Reinforcement learning</concept_desc>
<concept_significance>300</concept_significance>
</concept>
</ccs2012>
\end{CCSXML}

\ccsdesc[500]{Information systems~Computational advertising}
\ccsdesc[300]{Theory of computation~Reinforcement learning}

\keywords{RTB; Display advertising; Bid optimization; Reinforcement learning}

\maketitle

\section{Introduction}
In recent years, online display advertising has become one of the most influential businesses, with
\$17.6 billion revenues in HY 2017 in US alone\cite{iab-hy-2017}. Real-Time Bidding (RTB) \cite{wang2015real,yuan2013real} is probably the most important mechanism in online display advertising. In RTB, the advertisers have the ability to bid for an ad impression opportunity and only the highest bidder wins the opportunity to display its ad. The winning advertiser has the privilege to enjoy the value of the ad impression but with a certain cost. The impression value is usually associated with the expectation of the desired outcomes such as ad clicks or conversions. The cost is usually determined based on the specific auction mechanism adopted. In this paper, without loss of generality, we focus our discussion under the second price auction \cite{edelman2007internet} where the winning advertiser is charged the second highest bid in the auction. Our approach is also applicable under other auction mechanisms such as Vickrey-Clarke-Groves auction (VCG) \cite{nisan2007algorithmic}.

In RTB, a typical optimization goal for the advertisers is to maximize the total value of winning impressions under a certain budget. Budget constrained bidding is an automated bidding strategy to address this need. The advertisers can simply set their optimization goal such as maximizing total ad clicks or conversions with a certain budget and the bidding strategy is able to calculate the bid for each ad impression opportunity on behalf of the advertisers to achieve their optimization goal. This bidding strategy, which significantly simplifies marketing strategy and improves marketing efficiency, is widely provided by many global advertising platforms such as Google, Facebook, and Alibaba.

Budget constrained bidding is essentially a knapsack problem\cite{lin2016combining}, and Zhang et al. \cite{zhang2016optimal} gave the theoretical proof that the optimal bid takes the form of $v/\lambda$ under the second price auction mechanism, where $v$ is the impression value and $\lambda$ is a scaling parameter mapping $v$ to a scale of bid. This bidding formula has a very straightforward interpretation: the larger the impression value is, the higher the bid should be offered, and the impression value is scaled by a parameter $\lambda$ to derive the bid. A high $\lambda$ may cause a low bid, which might be lower than the optimal one, so that budget may not be fully spent. On the contrary, a low $\lambda$ may make the bid higher than the optimal one and budget may run out early so that potentially more valuable impressions may be missed. Unfortunately, it is extremely difficult to obtain the optimal $\lambda$ in the RTB environment. First, there are usually thousands or even more heterogeneous bidders competing for the same ad opportunities which makes the marketplace highly dynamic and unpredictable. Second, the advertisers themselves may change campaign settings such as budget and targeted audience. 

There are two categories of existing work on resolving the budget constrained bidding problem. The first category makes use of the optimal bidding formula and dynamically tunes the parameter $\lambda$. 
One solution is to use $\lambda=f(budget)$, and this comes from the observation that the value of $\lambda$ determines the budget spending speed, and thus budget spending speed should be used as a signal to adjust $\lambda$. 
However, how to determine the best budget spending speed is still an open question \cite{zhou2008budget}. The second category formulates the auction process as a \textit{Markov Decision Process} (MDP) \cite{sutton1998reinforcement}, where an action in the MDP is to generate a bid. Based on this formulation, Cai et al. \cite{cai2017real} tried to use reinforcement learning (RL) algorithms to solve the budget constrained bidding problem. However, model-based RL approaches such as \cite{cai2017real} require storing the state transition matrix and using dynamic programming algorithms, whose computational cost is unacceptable in real-world advertising platforms.

In this paper, we propose a novel approach for budget constrained bidding by leveraging model-free reinforcement learning. More specifically, we train an agent to sequentially regulate the bidding parameter $\lambda$ instead of directly producing bids. The agent strives to learn and adapt to the highly non-stationary environment in order to make $\lambda$ always close to the optimal one. The appealing properties of the approach is not only sticking to the theoretical optimal bidding strategy but also avoiding expensive computational cost brought by model-based RL approaches. 


During the training process, we found that simply using immediate reward will make the agent easily converge to suboptimal solutions. This is mainly because the budget constraint is neglected in the reward design. The immediate reward will simply make agent obsessed with taking actions to decrease $\lambda$ to gain more immediate reward at each step. To address this challenge, we innovate a reward design methodology which leads the agent to efficiently converge to the optimal solution. 
We also designed an adaptive $\epsilon$-greedy policy to adjust the exploration probability based on how the state-action value is distributed. We conducted experiments on large-scale real dataset, and observed improvements on the key metric compared with the state-of-the-art RL-based bidding strategies. Our main contributions can be summarized as follows:

\begin{enumerate}
\item To the best of our knowledge, our work presents the first model-free RL approach for the budget constrained bidding problem. The approach demonstrates significant advantage over existing heuristic or model-based RL approaches.
\item A novel reward function design methodology is proposed to lead the budget constrained bidding agent to its optimum efficiently. This methodology can also be applied to other scenarios involving resource-constrained RL.
\item We invent a deep reinforcement learning to bid framework that puts together the capabilities of deep neural networks, model-free RL, and our policy optimization innovations. The framework is already deployed and validated in industry-scale advertising systems.
\end{enumerate}

The rest of this paper is organized as follows. Section 2 introduces the budget constrained bidding problem and reinforcement learning basics. In section 3, we present our model-free RL approach towards the budget constrained bidding problem and give detailed analysis on how to design reward function and improve $\epsilon$-greedy policy. Section 4 discusses the experimental evaluation results, followed by related work in section 5. Section 6 concludes the paper. 

\section{Background}

\subsection{Budget Constrained Bidding} \label{problem_formulation}

Budget constrained bidding is a typical strategy in RTB where the advertisers hope to maximize the total value of winning impressions under a budget. The bidding process of an advertiser can be described as follows. During a time period, one day for instance, there are $N$ impression opportunities arriving sequentially ordered by an index $i$. For simplicity, we slightly abuse the notations and use the index $i$ to represent the $i$-th impression opportunity. The advertiser gives a bid $b_i$ according to the impression value $v_i$ and competes with other bidders in real-time. If $b_i$ is the highest in the auction, the advertiser has the privilege to display its ad and enjoys the impression value. Winning an impression also associates with a cost $c_i$. In the second price auction, cost is determined by the second highest bid in the auction. The bidding process terminates whenever the total cost reaches the advertiser's budget limit $B$ or all the impression opportunities have gone through the auction. Let $x_i$ be a binary indicator whether the advertiser wins impression $i$, the goal of budget constrained bidding is to maximize the total value of winning impressions under the budget:

\begin{equation} \label{bcb_f}
\begin{aligned}
\max \displaystyle\sum\limits_{i=1...N}x_i v_i \ \ \\
s.t. \sum_{i=1}^N x_i c_i \leq B
\end{aligned}
\end{equation}

Lin et al. \cite{lin2016combining} formalized budget constrained bidding as a knapsack problem, and Zhang et al. \cite{zhang2016optimal, zhang2014optimal} proved that under the second price auction, the optimal bidding strategy takes the form of
\begin{equation} 
b_i=v_i/\lambda \label{eq:1}
\end{equation}
where $\lambda$ is a scaling factor. When the impression opportunity sequence is known apriori, the optimal $\lambda$, say $\lambda^*$, can be derived through greedy approximation algorithm \cite{dantzig1957discrete}. Unfortunately, the strategy needs to bid in real time without knowing the candidate impressions, and the auction environment is usually highly non-stationary due to the dynamics of all the participating bidders, which makes $\lambda^*$ hard to obtain. 

\subsection{Reinforcement Learning and Constrained Markov Decision Process}
Reinforcement learning (RL) is a machine learning approach inspired by behaviorist psychology. In RL, an agent interacts with its environment by sequentially taking actions, observing consequences, and altering its behaviors in order to maximize a cumulative reward. RL is usually modeled as a \textit{Markov Decision Process} (MDP) which consists of: a state space $\mathcal{S}=\{s\}$, an action space $\mathcal{A}=\{a\}$, state transition dynamics $\mathcal{T}: \mathcal{S}\times\mathcal{A}\rightarrow\mathcal{P}(\mathcal{S})$ where $\mathcal{P}(\mathcal{S})$ is the set of probability measures on $\mathcal{S}$, an immediate reward function $r: \mathcal{S}\times\mathcal{A} \rightarrow \mathbb{R}$, and a discount factor $\gamma\in [0,1]$. A policy, denoted by $\pi: \mathcal{S}\rightarrow\mathcal{P}(\mathcal{A})$ where $\mathcal{P}(\mathcal{A})$ is the set of probability measures on $\mathcal{A}$, fully defines the behavior of an agent. The agent uses its policy to interact with the environment and gives a trajectory of states, actions, and rewards $s_1, a_1, r_1, ..., s_T, a_T, r_T$ ($T=\infty$ indicates a infinite horizon MDP and otherwise an episodic one) over $\mathcal{S}\times\mathcal{A}\times \mathbb{R}$. The cumulative discounted reward constitutes the return $R=\sum_{t=1}^T\gamma^{t-1}r_t$. The agent's goal is to learn an optimal policy $\pi^*$ that maximizes the expected return from the start state.
\begin{equation} \label{eq:policy}
\begin{aligned}
\pi^*=\arg\max_{\pi} \mathbb{E}[R|\pi]
\end{aligned}
\end{equation}

Interacting with the environment may also incur costs $c^k: \mathcal{S}\times\mathcal{A}\rightarrow\mathbb{R}, k\in\{1,...,K\}$ where $c^k$ is the cost of type $k$. Let $C^k=\sum_{t=1}^T\gamma^{t-1}c_t^k$ be the cumulative discounted cost of type $k$. \textit{Constrained Markov Decision Process} (CMDP) \cite{altman1999constrained} extends MDP by introducing additional constraints on the costs so that the policy optimization problem can be written as

\begin{equation} \label{eq:policy}
\begin{aligned}
\max_{\pi}\ &\mathbb{E}[R|\pi] \\
s.t.\ \mathbb{E}(C^k|\pi) \leq &\widehat{C}^k\ \ \forall\ k\in\{1,...,K\}
\end{aligned}
\end{equation}

\noindent where $\widehat{C}^k$ is the constraint of type $k$ cost. 

\section{Method}
It is intuitive to model budget constrained bidding with CMDP where the agent's action is submitting bids to sequential impression opportunities and the constraint is the total budget. However, this straightforward method is not feasible in the real applications. First, policy optimization in CMDP is usually solved via model-based RL approaches and more specifically via linear programming \cite{altman1999constrained,huang2014constrained,haskell2013stochastic,du2017improving}. These approaches need to know (or predict) the transition dynamics $\mathcal{T}$ apriori. However, the RTB environment is highly non-stationary and unpredictable (as argued in Section 1) which makes the transition dynamics hard to obtain. Second, these approaches are also computationally costly and therefore are not applicable in the RTB environment where there are typically billions of impression opportunities to bid for an advertiser on a daily basis. 

To tackle these challenges and inspired by the optimal bidding theory \cite{zhang2016optimal}, we propose to model budget constrained bidding as a $\lambda$ control problem in the CMDP framework and solve it via model-free RL. More specifically, we employ deep Q-network (DQN) \cite{mnih2015human}, a variant of Q-learning as our RL algorithm. Q-learning iteratively updates the action-value function $Q(s_t,a_t)$ to quantify the quality of taking action $a_t$ at state $s_t$ and DQN uses deep neural networks to represent this function. We also find that the immediate reward function will make the agent easily converge to suboptimal solutions.
This is mainly because the immediate reward function neglects the budget constraint. To address this challenge, we innovate a reward function design methodology for model-free RL to solve the CMDP problem, which leads the agent to converge to the optimal solution efficiently. Besides, we also improve the $\epsilon$-greedy policy in our scenario, helping the agent balance exploitation and exploration.

\subsection{Modeling}

\begin{figure} 
  \centering
    \includegraphics[width=0.48\textwidth]{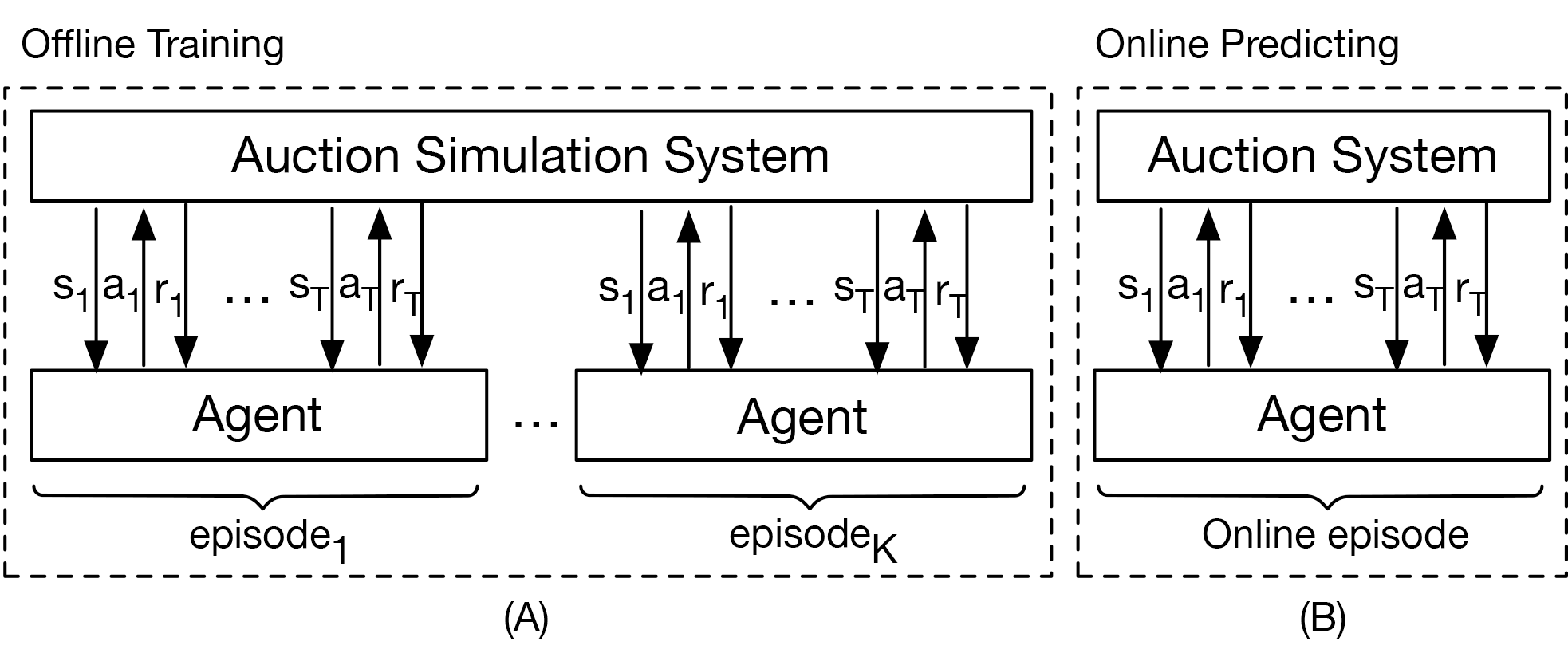}
  \caption{Illustration of $\lambda$ control process in budget constrained bidding. (A) Agent training process. (B) Agent online predicting process.}
  \label{fg:modelling}
\end{figure}

We consider an episodic CMDP with discount factor $\gamma=1$ where an episode (typically one day) starts with a budget $B$ and an initial $\lambda_0$. As shown in Fig. \ref{fg:modelling}, the agent regulates $\lambda$ sequentially with a fixed number of $T$ steps (typically 15min between two consecutive steps) until the episode ends. At each time step $t\in\{1,...,T\}$, the agent observes state $s_t$ and takes an action $a_t$ to adjust $\lambda_{t-1}$ to $\lambda_t$. The bid for any impression opportunity $i$ between time step $t$ and $t+1$ is decided by $v_i/\lambda_t$. The total value of the winning impressions between $t$ and $t+1$ constitutes the immediate reward $r_{t}$, associated with a cost $c_{t}$. The goal of the agent is to learn an optimal $\lambda$ control policy to maximize the cumulative reward $\sum_{t=1}^Tr_t$ as long as $\sum_{t=1}^Tc_t \leq B$. More specifically, the core elements of the CMDP are further explained as follows:

\begin{itemize}
\item $\mathcal{S}$: The state should in principle reflect the RTB environment and the bidding status. We consider the following statistics to represent state $s_t$: 1) $t$: the current time step, 2) $B_t$: the remaining budget at time-step $t$, 3) $ROL_t$: the number of $\lambda$ regulation opportunities left at step $t$, 4) $BCR_t=(B_t-B_{t-1})/B_{t-1}$: the budget consumption rate, 5) $CPM_t$: the cost per mille of impressions of the winning impressions between $t-1$ and $t$, 6) $WR_t$: the auction win rate reflecting the ratio of winning impressions versus total impression opportunities, and 7) $r_{t-1}$: the total value of winning impressions such as the total clicks or conversions at time-step $t-1$. 
\item $\mathcal{A}$: We design a number of adjustment rates to $\lambda$ so that an action $a\in\mathcal{A}$ typically takes the form of $\lambda_t = \lambda_{t-1}\times(1+\beta_a)$ where $\beta_a$ is the adjustment rate associated with $a$. 
\item $\mathcal{T}$: Since we take the model-free RL approach, we can derive the policies without considering the transition dynamics. 
\item $r_t$: The immediate reward at time step $t$ is $\sum_{i\in\mathbb{I}_t}x_iv_i$ where $\mathbb{I}_t$ is the set of impression opportunities between time step $t$ and $t+1$. 
\item $c$: Similar with the immediate reward definition, the cost between time step $t$ and $t+1$ is $\sum_{i\in\mathbb{I}_t}x_ic_i$.
\item $\gamma$: We set reward(cost) discount factor $\gamma=1$ since the optimization goal of the budget constrained bidding problem is to maximize total reward value under the cost constraint regardless of the reward (cost) time. 
\end{itemize}

Deriving this episodic CMDP setup does not necessarily mean the problem is solved. In fact, the immediate reward function will make the agent converge to the suboptimal solution and the exploration strategies in DQN is not efficient under some circumstances. In the rest of this section, we focus our discussion on the reward function design methodology and the exploration improvements.

\subsection{Reward Function Design}
\subsubsection{The Reward Function Design Trap}
In our problem modeling, the immediate reward $r_t$ is the total value of winning impressions between $t$ and $t+1$. It can be easily inferred that $r_t$ monotonously increases as $\lambda_{t-1}$ decreases (and vice versa) as a small $\lambda_{t-1}$ encourages an aggressive bidding and therefore wins more impressions between step $t-1$ and $t$. In this case, the agent will be obsessed with taking actions to decrease $\lambda$ and finally converge to suboptimal solutions. The reason why the agent tends to converge to suboptimal solutions is two-folded:
\begin{itemize}

\item\textbf{Neglection of budget constraint:} Budget is the critical resource in budget constrained bidding. It is not difficult to imagine that consuming too much budget acquiring impressions at the beginning may not be the optimal strategy. However, the immediate reward does not consider the budget constraint at all. 

\item \textbf{Policy greedy character and poor exploration efficiency:} 
At the beginning of the DQN training process, reward $r_t$ will significantly affect $Q$'s output, which makes $Q$ tend to give relatively higher output to the action receiving larger immediate reward. As a result, the agent will hardly change its inclination because there is no punishment in our formulation\footnote{Existing work in the CMDP framework to resolve this trap is introducing a punishment factor $\alpha$ to integrate the cost into the reward function, i.e., $r'_t$=$r_t+\alpha c_t$. However, deriving the right $\alpha$ can be another cumbersome task since it will take a lot of time to balance $\alpha$ and final performance.}. Further the line, once the agent has the inclination of consuming more budget at the beginning of the episode, it becomes very hard to explore the optimal sequence of actions, because the agent will miss the potentially more valuable impressions later in the episode due to running out of budget.
\end{itemize} 

\subsubsection{A Reward Design Methodology} \label{rf_intro}

It becomes crucial to design a new reward function that can avoid the drawbacks of the immediate reward and is simple enough to boost the agent's convergence to the optimal one. Here ``simple'' means: 1) the reward naturally encodes the constraint, 2) it is easy to implement, and 3) it can be generalized to other scenarios beyond budget constrained bidding.
Let us take a look at the problem from another angle. Intuitively, the return of an entire episode will tell us how good the agent does. We believe this would be a good reward for all the $(s,a)$ pairs in this episode (recall that an episode is a sequence of states, actions, and rewards). In order to relieve the effect brought by other state-action pairs\footnote{The return of an episode is jointly decided by all the $(s,a)$ pairs in it.}, we consider a new reward function design for $(s,a)$ with the form:

\begin{equation} \label{eq:r_def}
\begin{aligned}
\mathtt{r}(s, a) = \max_{e \in E(s,a)} \Sigma_{t=1}^T r_t^{(e)} \\
\end{aligned}
\end{equation}

\noindent where $E(s,a)$ represents the set of existing episodes that the agent took action $a$ at state $s$ and $r_t^{(e)}$ is the original immediate reward at step $t$ within episode $e$. We leverage the episodic nature of the process so that new reward will be continuously updated during the policy optimization. Please note that the methodology can be generalized into other resource-constrained RL problems such as the game Gold Miner\footnote{https://www.crazygames.com/game/gold-miner}, in which the action results in more value in unit time will be more rewarded with our reward design methodology.

One may have the concern that whether the optimal policy with the new reward function is the same as the one with the original immediate reward. Fortunately, they are the same as long as there is only one initial state and the MDP is deterministic.

\begin{theorem} 
Let $\pi^*_\mathtt{r}$ be an optimal policy if the reward function is $\mathtt{r}$ in our MDP formulation. If the deterministic MDP with fixed T steps has only one initial state  $s_1$, $\pi^*_\mathtt{r}$ is guaranteed to be an optimal policy $\pi^*_r$ in the original MDP  formulation with immediate reward $r$.\footnote{We omit both the cost function $c$ and constraint $B$ because the optimal policy must have met the budget constraint.} \label{algo:prove}
\end{theorem}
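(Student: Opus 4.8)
The plan is to sandwich the optimal return under the new reward $\mathtt{r}$ between two quantities that both equal $T$ times the optimal return under the original immediate reward $r$. Write $V^{*}:=\max_{\pi}\mathbb{E}[R\mid\pi]$ for the optimal return under $r$; since the MDP is deterministic, has the single initial state $s_{1}$, and runs for a fixed horizon $T$, this $V^{*}$ is simply the largest value of $\sum_{t=1}^{T}r_{t}$ over all length-$T$ trajectories starting at $s_{1}$. Two structural facts will be used repeatedly: each state records its time index, so the reachable state space is layered by step and $s_{1}$ is the only step-$1$ state; and every episode ever recorded (hence every element of any $E(s,a)$) is itself such a horizon-$T$ trajectory from $s_{1}$, so its total return is at most $V^{*}$.

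First I would prove the upper bound $\mathtt{r}(s,a)\le V^{*}$ for every $(s,a)$ that occurs in some episode: by definition $\mathtt{r}(s,a)$ is the return $\sum_{t=1}^{T}r_{t}^{(e)}$ of some episode $e$, and by the remark above this is $\le V^{*}$. Hence for any policy $\pi$ with induced trajectory $s_{1}a_{1}\cdots s_{T}a_{T}$, its return under reward $\mathtt{r}$ is $\sum_{t=1}^{T}\mathtt{r}(s_{t},a_{t})\le TV^{*}$. Next I would show this bound is attained, so that the optimal $\mathtt{r}$-return is exactly $TV^{*}$: taking an $r$-optimal policy $\pi^{*}_{r}$ and the trajectory $\tau^{o}$ it generates, and assuming $\tau^{o}$ has been recorded, we have $\tau^{o}\in E(s_{t},a_{t})$ for every step $t$ along it, so $\mathtt{r}(s_{t},a_{t})\ge\sum_{t'=1}^{T}r_{t'}^{(\tau^{o})}=V^{*}$, which combined with the upper bound forces $\mathtt{r}(s_{t},a_{t})=V^{*}$ along $\tau^{o}$; summing gives $\mathbb{E}[R\mid\pi^{*}_{r}]=TV^{*}$ under $\mathtt{r}$.

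The conclusion then follows from a self-referential lower bound applied to $\pi^{*}_{\mathtt{r}}$. Let its induced trajectory be $s_{1}a_{1}\cdots s_{T}a_{T}$ and set $W:=\sum_{t'=1}^{T}r(s_{t'},a_{t'})$, its return under the original reward. This trajectory is itself a recorded episode, hence lies in $E(s_{t},a_{t})$ for every $t$, so $\mathtt{r}(s_{t},a_{t})\ge W$ and the $\mathtt{r}$-return of $\pi^{*}_{\mathtt{r}}$ is at least $TW$. But $\pi^{*}_{\mathtt{r}}$ is $\mathtt{r}$-optimal, so by the previous step its $\mathtt{r}$-return equals $TV^{*}$; therefore $TW\le TV^{*}$, i.e. $W\ge V^{*}$, and since trivially $W\le V^{*}$ we conclude $W=V^{*}$. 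Thus $\pi^{*}_{\mathtt{r}}$ attains the optimal original return and is an optimal policy $\pi^{*}_{r}$.

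I expect the main obstacle to be pinning down the status of $\mathtt{r}$ itself: because $\mathtt{r}$ is defined through the growing set of observed episodes $E(s,a)$, the attainability step genuinely needs the (mild) assumption that an $r$-optimal trajectory has already been sampled — equivalently, that one works with the converged reward in which $\mathtt{r}(s,a)$ equals the maximal return over all episodes passing through $(s,a)$ — and I would state this hypothesis explicitly at the outset. The only other points requiring care are the two structural facts that carry the whole argument, namely that every episode is a horizon-$T$ trajectory out of the unique initial state $s_{1}$ (which is what forces $\mathtt{r}(s,a)\le V^{*}$) and that determinism makes the trajectory induced by a policy, and hence its self-generated episode, well defined; these are precisely where the hypotheses ``deterministic MDP'', ``fixed $T$ steps'', and ``only one initial state $s_{1}$'' are used.
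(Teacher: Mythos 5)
Your first two steps reproduce the paper's argument almost exactly: the upper bound $\mathtt{r}(s,a)\le V^{*}$ (the paper's Eq.~\eqref{eq:proof1}), hence any policy's $\mathtt{r}$-return is at most $TV^{*}$ (Eq.~\eqref{eq:proof2}), and attainability via the recorded $r$-optimal episode (giving $V^{\pi^{*}_{r}}_{\mathtt{r}}=TV^{*}$, and with $\mathtt{r}$-optimality of $\pi^{*}_{\mathtt{r}}$, $V^{\pi^{*}_{\mathtt{r}}}_{\mathtt{r}}=TV^{*}$). Your explicit flagging of the hypothesis that the $r$-optimal trajectory has actually been recorded in $E(\cdot,\cdot)$ is a point the paper leaves implicit, and is worth stating.

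However, your concluding step contains a genuine error: from ``the $\mathtt{r}$-return of $\pi^{*}_{\mathtt{r}}$ is at least $TW$'' and ``the $\mathtt{r}$-return of $\pi^{*}_{\mathtt{r}}$ equals $TV^{*}$'' you can only deduce $TW\le TV^{*}$, i.e.\ $W\le V^{*}$ --- the trivial direction --- not $W\ge V^{*}$ as you write. The self-referential lower bound $\mathtt{r}(s_{t},a_{t})\ge W$ points the wrong way and cannot close the argument. The paper closes it differently: since $\sum_{t=1}^{T}\mathtt{r}(s_{t},a_{t})=TV^{*}$ while each summand satisfies $\mathtt{r}(s_{t},a_{t})\le V^{*}$, equality must hold termwise, so $\mathtt{r}(s_{t},a_{t})=V^{*}$ for every $t$; that is, every state--action pair on the $\pi^{*}_{\mathtt{r}}$ trajectory lies on some episode of return $V^{*}$, from which (using determinism and the unique initial state) one concludes the trajectory itself is $r$-optimal. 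You should replace your last paragraph with this equality-condition argument; note that even then the final passage from ``each $(s_{t},a_{t})$ lies on \emph{some} optimal episode'' to ``the trajectory traced by $\pi^{*}_{\mathtt{r}}$ is itself optimal'' deserves a sentence (e.g.\ an induction splicing optimal continuations along the trajectory), a point the paper itself treats rather tersely.
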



\begin{proof}
Let $\{(s_i,\pi^*_\mathtt{r}(s_i),\mathtt{r}(s_i,\pi^*_\mathtt{r}(s_i)))\},i=1,..,T$ be an episode produced by policy $\pi^*_\mathtt{r}$ with initial state $s_1$, where $\pi^*_\mathtt{r}(s_i)$ is the action taken at state $s_i$ and $T$ is the episode length. Let $V^\pi_\mathtt{r}$ denote the return obtained when applying  policy $\pi$ to the MDP with reward function $\mathtt{r}$. Since the MDP has only one initial state, it can be guaranteed that
\begin{equation} \label{eq:proof1}
\mathtt{r}(s_i,\pi^*_{\mathtt{r}}(s_i)) \leq V^{\pi^*_{r}}_{r}, \forall i = 1, ..,T. 
\end{equation}
where $V^{\pi^*_{r}}_{r}$ is denoted as the maximal return if the reward function is $r$.
Therefore, we can infer that ${V^{\pi^*_{\mathtt{r}}}_{\mathtt{r}}}$, i.e., the maximal return if the reward function is $\mathtt{r}$, is no larger than $T\cdot{V^{\pi^*_{r}}_{r}}$, i.e.,
\begin{equation} \label{eq:proof2}
{V^{\pi^*_{\mathtt{r}}}_{\mathtt{r}}} = \Sigma^T_{i=1} \mathtt{r}(s_i,\pi^*_{\mathtt{r}}(s_i)) \le T\cdot{V^{\pi^*_{r}}_{r}},
\end{equation}
where equality holds if and only if ${\mathtt{r}(s_i,\pi^*_{\mathtt{r}}(s_i))} = {V^{\pi^*_{r}}_{r}}, \forall i = 1,...,T$.

On the other hand, the episode produced by $\pi^*_{r}$ has the maximal return ${V^{\pi^*_{r}}_{r}}$. Therefore at each state of this episode, the reward $\mathtt{r}$, which is the maximal return of episodes according to Eq.~\eqref{eq:r_def}, is also ${V^{\pi^*_{r}}_{r}}$ ,i.e.,
\begin{equation} 
{V^{\pi^*_{r}}_{\mathtt{r}}} = \Sigma^T_{i=1} \mathtt{r}(s_i,\pi^*_{r}(s_i)) = T\cdot{V^{\pi^*_{r}}_{r}}
\end{equation}

Since $\pi^*_{\mathtt{r}}$ is the optimal policy if the reward function is $\mathtt{r}$, it obtains the maximal value w.r.t. reward $\mathtt{r}$. Thus we have

\begin{equation} \label{eq:proof4}
\begin{aligned}
{V^{\pi^*_{\mathtt{r}}}_{\mathtt{r}}} \ge 
{V^{\pi^*_{r}}_{\mathtt{r}}} 
= T \cdot {V^{\pi^*_{r}}_{r}}.
\end{aligned}
\end{equation}

Based on Eqs. \eqref{eq:proof2} and \eqref{eq:proof4}, we have
${V^{\pi^*_{\mathtt{r}}}_{\mathtt{r}}}= T \cdot {V^{\pi^*_{r}}_{r}}$ and ${\mathtt{r}(s_i,\pi^*_{\mathtt{r}}(s_i))} = {V^{\pi^*_{r}}_{r}}, \forall i=1,..,T$.

This means $\pi^*_{\mathtt{r}}(s_i)$ is also an optimal action if the reward function is $r$ for any state $s_i$. Therefore, the optimal policy with reward function $\mathtt{r}$ is also optimal with reward function $r$.
\end{proof}

\subsection{Adaptive $\epsilon$-greedy Policy}
We use DQN as our model-free RL algorithm and it is default to uses $\epsilon$-greedy policy to balance exploitation and exploration, i.e., the agent chooses action $a^*=\arg\max_a Q(s,a)$ with probability 1-$\epsilon$ or otherwise takes a random action. 
$\epsilon$ is usually initialized with a large value and gradually anneals to a small value over time. However, how to set a proper annealing speed is critical: a high annealing speed usually makes exploration insufficient while a low one usually results in a slow policy convergence.

Fortunately, in the budget constrained bidding problem, the optimal bidding theory guarantees a fixed optimal $\lambda_t^*$ for each step $t \in \{1, ..., T\}$. The optimal action at state $s_t$ is to adjust $\lambda$ as close to $\lambda_t^*$ as possible, otherwise, the more deviating from the optimal action, the more potential value should be got (less $Q$ value). Therefore, based on the action space $\mathcal{A}$ we defined (i.e., a set of adjustment rate $\{\beta_a\}$), the action-value distribution $Q(s_t,a_t)$ over $\mathcal{A}$, sorted by the action's adjustment scale $\beta_a$, should be unimodal, such as the plot illustrated in Fig. \ref{fig:adaptive_f1}. When the distribution is not unimodal, e.g. the plot in Fig. \ref{fig:adaptive_f2}, we believe the current estimation of $Q$ is abnormal and the $\epsilon$ should be increased to encourage more explorations under this state. This simple yet efficient adaptive $\epsilon$-greedy policy usually works well in our  practice with the budget constrained bidding problem.

\begin{figure}
    \centering
    \begin{subfigure}[b]{0.23\textwidth}
        \includegraphics[width=\textwidth]{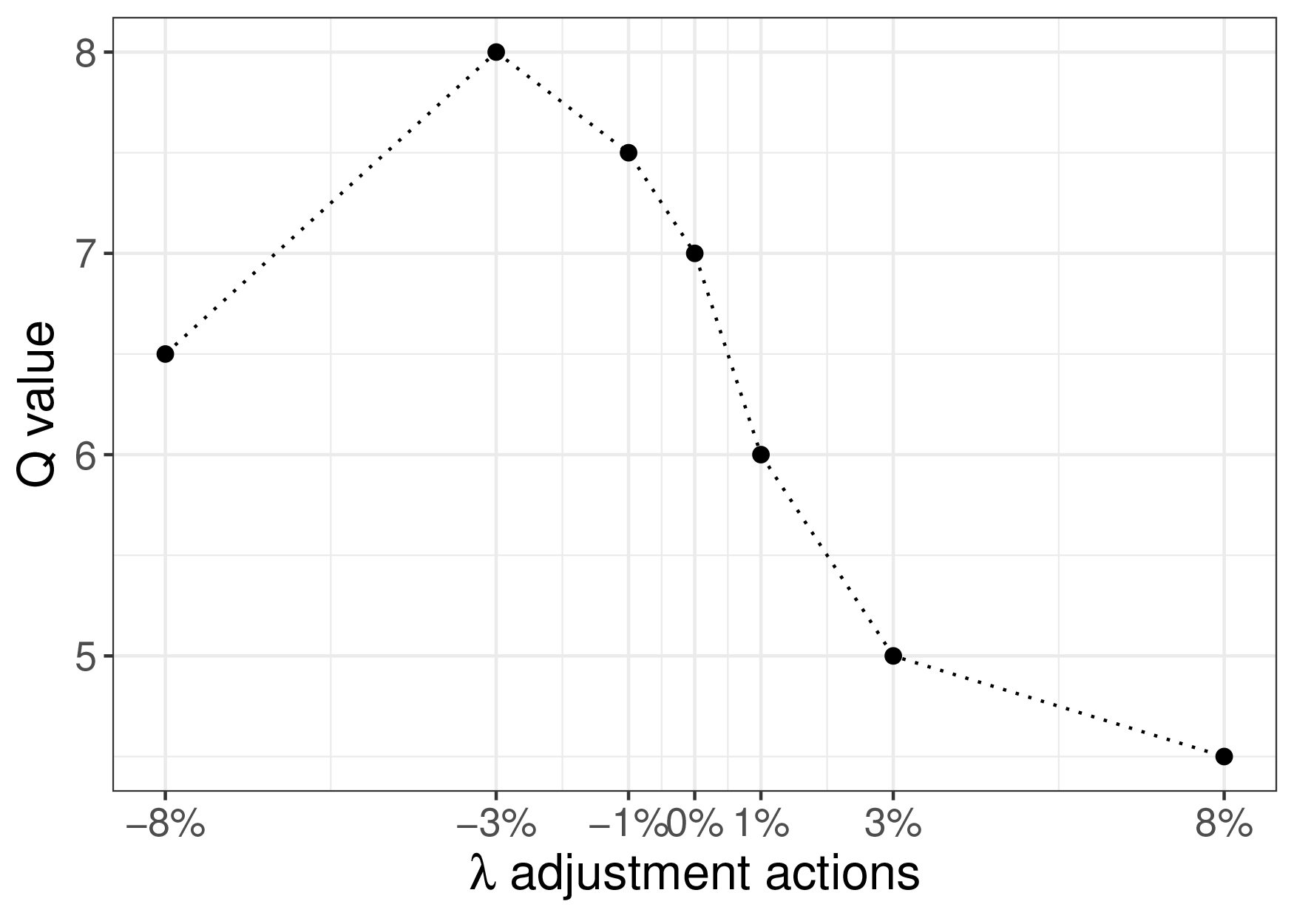}
        \caption{ }

        \label{fig:adaptive_f1}
    \end{subfigure}
    \begin{subfigure}[b]{0.23\textwidth}
        \includegraphics[width=\textwidth]{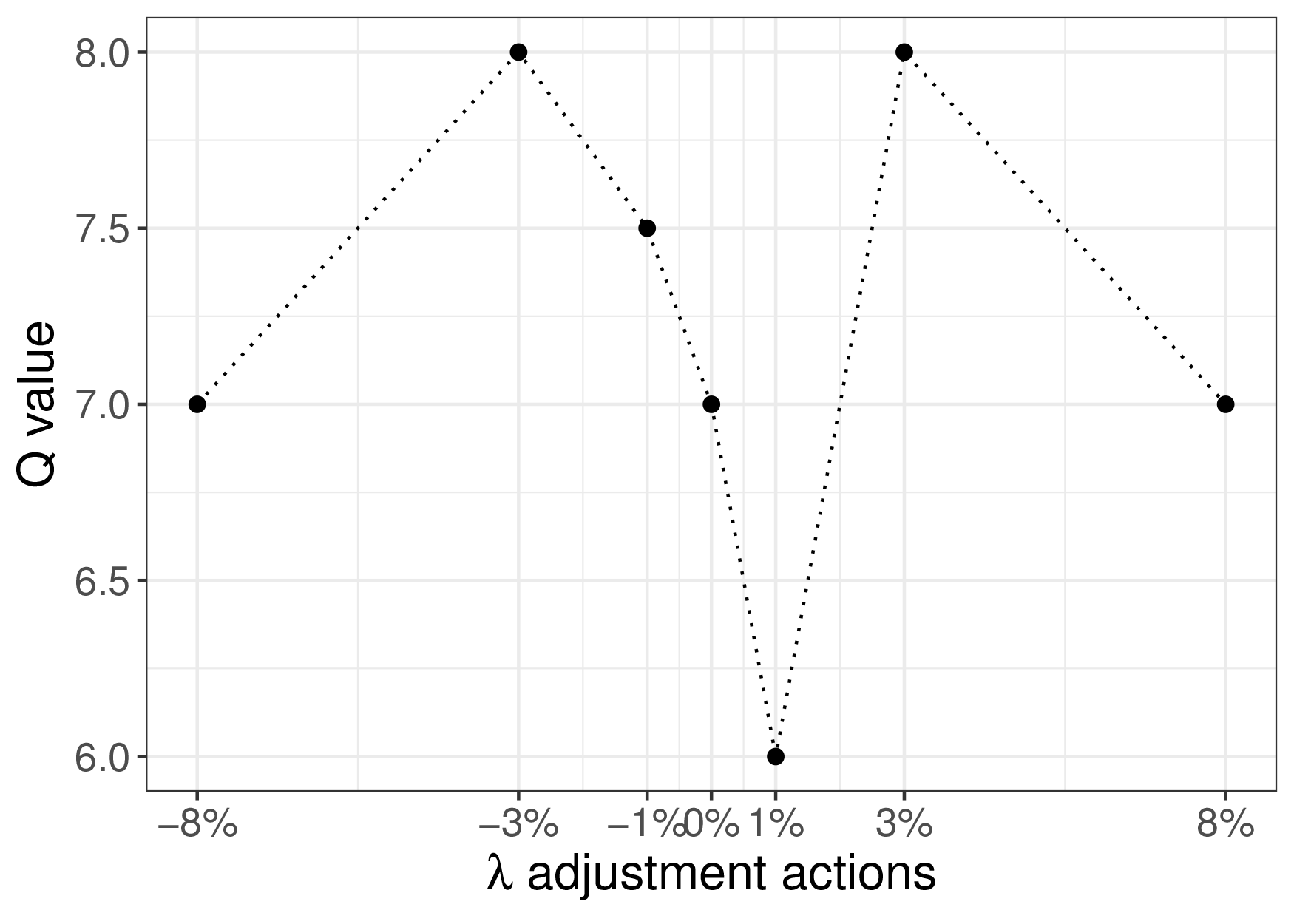}
        \caption{ }

        \label{fig:adaptive_f2}
    \end{subfigure}
    \caption{Distribution examples of action-value $\pmb{Q}$ during training. (a) Normal distribution. (b) Abnormal distribution.}\label{fg:adaptive_f}
\end{figure}

\subsection{Deep Reinforcement Learning to Bid}

Putting them together, we present our Deep Reinforcement Learning to Bid (DRLB) framework.

The framework is built on top of DQN\cite{mnih2015human}, where the state-action value function $Q$ is given by a deep neural network parameterized with $\theta$. The process an agent interacting with the auction system within the DRLB framework can be illustrated in Fig. \ref{fg:5}. Based on the adaptive $\epsilon$-greedy policy, the agent takes an action $a_t \in \mathcal{A}$ (adjusting $\lambda_{t-1}$ to $\lambda_{t}$) under state $s_t \in \mathcal{S}$ at step $t \in \{1, ..., T\}$. Then, bids are produced based on Eq. \eqref{eq:1} with $\lambda_t$ for the advertiser to compete with other bidders. At step $t+1$, the environment returns $\mathtt{r}_{t+1}$ and $s_{t+1}$. 
The agent updates $\theta$ by performing a gradient descent according to the loss calculated based on a mini-batch of $(s, a, s', \mathtt{r}(s,a))$ sampled from experience. The complete DRLB framework is presented in Algo. \ref{algo:2}. 

One thing deserves particular attention is that, different from immediate reward $r_t$, our reward function $\mathtt{r}_t$ is not directly observable from the environment at each step. However, storing $\mathtt{r}(s,a)$ for all the $(s,a)$ pairs is not feasible for large-scale applications such as in industry-level advertising systems. Therefore, we use another deep neural network to predict it and the neural network is called RewardNet in our framework. The RewardNet, parameterized with $\eta$, is simultaneously learned with the $Q$ function in the DRLB framework. The algorithm used to learn this RewardNet is shown in Algo. \ref{algo:1}.


\begin{figure} 
  \centering
    \includegraphics[width=0.35\textwidth]{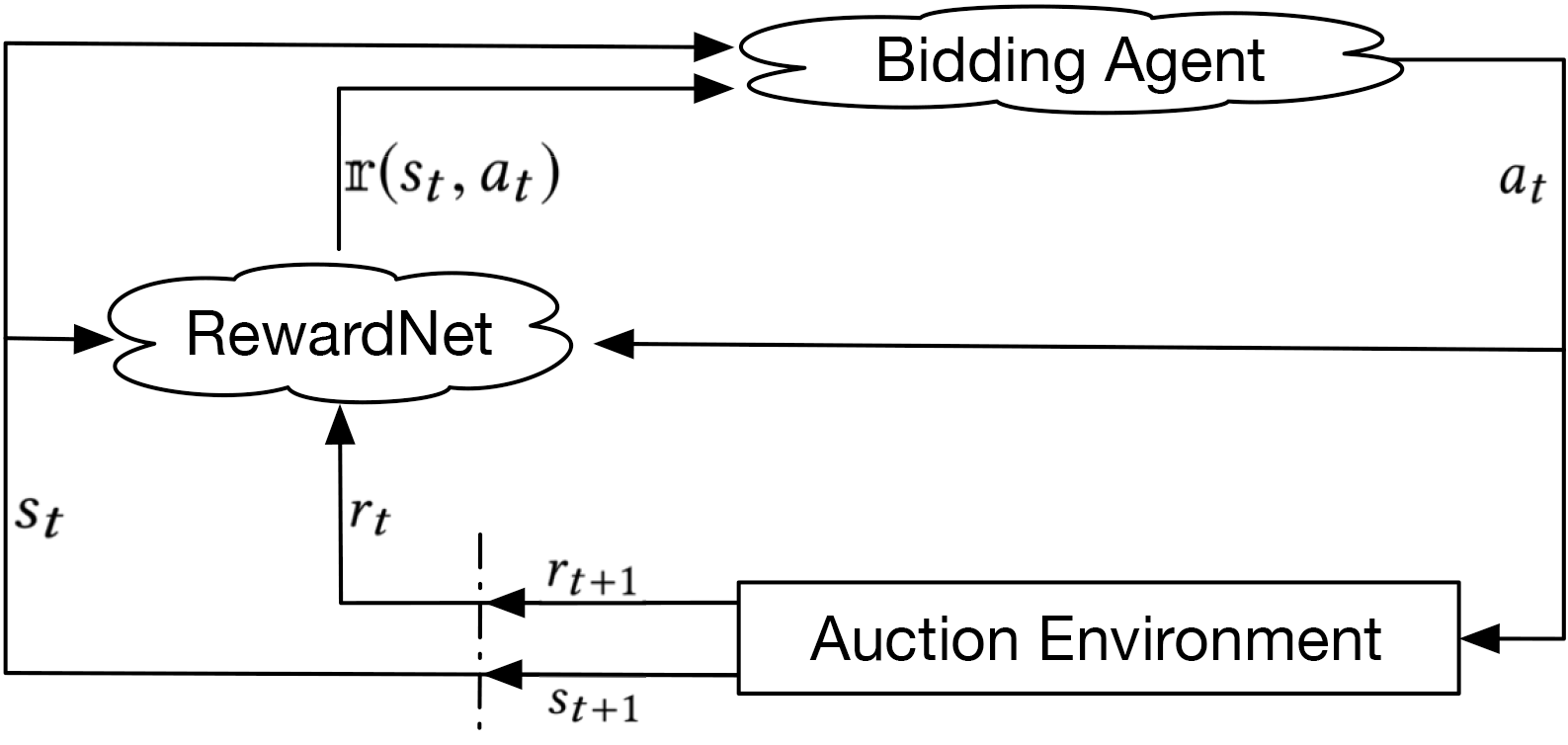}
  \caption{Illustration of Deep Reinforcement Learning to Bid.}
  \label{fg:5}
\end{figure}

\begin{algorithm}
Initialize replay memory $\mathcal{D}_1$ to capacity $N_1$\;
Initialize $Q$ with random weights $\theta$\;
Initialize $Q\ target$ with weights $\theta^- = \theta$\;

\For{$episode = 1$ \KwTo $K$}{
Initialize $\lambda_0$\;
Bid with $\lambda_0$ according to Eq. \eqref{eq:1}\;
\For{$t = 1$ \KwTo $T$}{
Update \textbf{RewardNet} (8-10 in Algo. \ref{algo:1})\;
Observe state $s_t$\;
Get action $a_t$ from \textbf{adaptive $\boldsymbol{\epsilon}$-greedy policy}\;
Adjust $\lambda_{t-1}$ to $\lambda_t$\;
Bid with $\lambda_t$ according to Eq. \eqref{eq:1}\;
Get $r_{t}$ from \textbf{RewardNet}\;
Observe next state $s_{t+1}$\;
Store $(s_t, s_{t+1}, a_t, r_{t})$ in $\mathcal{D}_1$\;
Sample mini-batch of $(s_j, s_{j+1}, a_j, r_{j})$ from $\mathcal{D}_1$\;
\uIf{$s_{j+1}$ is the terminal state}{
Set $y_j=r_j$\;
}
\Else{
Set $y_j=r_j + \gamma \max_{a'} Q(s_{j+1}, a'; \theta^-)$\;
}
Perform a gradient descent step on $(y_j-Q(s_j,a_j;\theta))^2$ with respect to $\theta$\;
Every C steps reset $Q\ target=Q$\;
}
Store data for \textbf{RewardNet}\;
}
\caption{Deep Reinforcement Learning to Bid}\label{algo:2}
\end{algorithm}

\begin{algorithm}
Initialize replay memory $\mathcal{D}_2$ to capacity $N_2$\;
Initialize reward network $\mathcal{R}$ with random weights $\eta$\;
Initialize reward dictionary $\mathcal{M}$ to capacity $N_3$\;
\For{$episode = 1$ \KwTo $K$}{

Initialize temporary set $\mathcal{S}$\;
Set $V$ = 0\;
\For{$t = 1$ \KwTo $T$}{
\uIf{$len(\mathcal{D}_2) >$ BatchSize}{
Sample mini-batch of $(s_j, a_j, \mathcal{M}(s_j,a_j))$ from $\mathcal{D}_1$\;
Perform a gradient descent step on $(\mathcal{R}(s_j,a_j;\eta)-\mathcal{M}(s_j,a_j))^2$ with respect to the network parameters $\eta$\;
}
Observe state $s_{t}$\;
RL agent executes $a_t$ in the Environment\;
Obtain immediate reward $r_{t}$ from the Environment\;
Set $V = V + r_{t}$\;
Store pair $(s_t, a_t)$ in $\mathcal{S}$\;
}
\For{($s_j,a_j$) \textbf{in} $\mathcal{S}$}{
Set $\mathcal{M}(s_j,a_j) = \max (\mathcal{M}(s_j,a_j), V)$\;
Store pair $(s_j, a_j, \mathcal{M}(s_j, a_j))$ in $\mathcal{D}_2$\;
\uIf{$|\mathcal{M}| > N_3$}{
Discard old key in $\mathcal{M}$ based on LRFU strategy \cite{lee2001lrfu}\;
}
}
}
\caption{Learning RewardNet}\label{algo:1}
\end{algorithm}

\section{Experimental Results}

In this section, we present the empirical study of DRLB. First, the experimental setup and implementation details of DRLB are introduced. Then, we quantitatively compare DRLB with several baseline methods and the state-of-the-art method RLB\cite{cai2017real} on two large-scale datasets. Finally, we investigate the effectiveness of the reward function design and the adaptive $\epsilon$-greedy policy.

\subsection{Experimental Setup}

\subsubsection{Dataset}
We investigate the performance of DRLB on two datasets. \emph{Dataset A} is from a leading e-commerce advertising platform in China. The dataset comprises 2 billion impressions and their predicted CTR from 10 continuous days in Jan 2018. \emph{Dataset B} is from iPinYou\footnote{iPinYou dataset is available at http://data.computational-advertising.org}, augmented with the predicted CTR produced by the same model used in \cite{cai2017real}. For both of the two datasets, the first 7 days of data are used for training and the last 3 days of data are used for evaluation. Each day comprises an episode and each episode consists of 96 time steps (15 minutes between two consecutive time steps).

It is worth noting that in \emph{Dataset A}, the bidding environment may change significantly on a daily basis. For instance, the advertisers of a certain e-commerce category, e.g. clothing, usually have to deal with traffic bursts and highly competitive bids from other bidders on certain holidays or festivals, such as Women's Day. The campaign settings such as budget are usually adjusted by the advertisers according to such situations as well. 

\subsubsection{Evaluation Metrics} \label{section:metric}

The goal of the budget constrained bidding is to maximize the total value of winning impressions. The impression value is usually associated with the expectation of the desired outcomes such as ad clicks or conversions. The evaluation metric can be defined as the total predicted CTR or the total real clicks of winning impressions. 

In \emph{Dataset A}, the auction log contains both the winning impressions and the lost impressions. Since the click event is not available for those lost impressions,  we consider using predicted CTR as the impression value. Based on the predicted CTR $v_i$ and cost $c_i$ for all candidate impressions at the end of the episode, it is easy to get the theoretically best result $R^*$ using the optimal $\lambda^*$ calculated with the greedy approximation algorithm \cite{dantzig1957discrete}. Thus, the difference between $R^*$ and the total value of the winning impressions $R$ under the current policy is a simple and effective metric to evaluate the policy, denoted by $R/R^*$. In \emph{Dataset B}, the click results for all the candidate impressions are known, so that the real click number from winning impressions is a proper metric to evaluate different methods. 






\subsubsection{Baseline Methods}
We compare DRLB with the widely used methods in the industry as well as the state-of-the-art method.

\textbf{1)\ Fixed Linear Bidding (FLB)} uses a fixed $\lambda_0$ to linearly bid according to Eq. \eqref{eq:flb}, which is very straightforward and is widely used in industry.

\begin{equation} 
bid=v_i/\lambda_0 \label{eq:flb}
\end{equation}

\textbf{2)\ Budget Smoothed Linear Bidding (BSLB)~\cite{hegeman2011budget}} gives a practical way of bidding under budget constraint. It combines the classic bidding Eq. \eqref{eq:flb} with the current budget consumption information $\Delta$, which equals to episode time left ratio divided by budget left ratio. When the budget left ratio is lower than the time left ratio, the bid is decreased by adjusting the $\lambda_0$ downward, otherwise, the bid is increased to consume more budget.

\begin{equation} \label{eq:7}
\begin{aligned}
bid = v_i / (\lambda_0 * \Delta)\\
\end{aligned}
\end{equation}

\textbf{3)\ Reinforcement Learning to Bid (RLB)~\cite{cai2017real}} is the state-of-the-art algorithm for budget constrained bidding. RLB formalized the auction process as a MDP where the agent needs to take action, i.e., provide a bid, for every impression opportunity. The agent is trained with a model-based RL approach to maximize total value of winning impressions under a certain budget constraint. 

\subsubsection{Implementation Details of DRLB}

In DRLB, we take a fully connected neural network with 3 hidden layers and 100 nodes for each layer as our state-action value function $Q$ and another identically structured neural network as the RewardNet. The mini-batch size is set to 32 and the replay memory size is set to 100,000. The agent has 7 candidate actions corresponding to 7 different $\lambda$ adjustment rates -8\%, -3\%, -1\%, -0\%, 1\%, 3\%, and 8\% respectively. $\lambda$ is adjusted every time step. The initial value of $\epsilon$ in $\epsilon$-greedy policy is set to 0.9 and final value is set to 0.05. The $\epsilon$ at each step $t$ is empirically set as $\max(0.95 - r_\epsilon\times t,\ 0.05)$, where $t$ is the time step number and $r_\epsilon$ is the parameter controlling the annealing speed. With the adaptive $\epsilon$-greedy policy, if the distribution of action-value $Q$ is not unimodal w.r.t. the sorted adjustment rates, the agent randomly chooses an action with probability $\epsilon=\max(\epsilon, 0.5)$. Following the common practice of DQN, we set every $C=100$ steps $Q\ target$ updates $\theta^-$ with $\theta$ and the learning rate is set to 0.001 and momentum is set to 0.95. 



%

\subsection{Evaluation Results on Dataset A}

We first conduct experiments to compare the performance of FLB, BSLB,RLB, and DRLB on \emph{Dataset A}. Please note that FLB, BSLB and DRLB require an initial $\lambda_0$ at the beginning of each episode. A heurestic to derive $\lambda_0$ is to use the theorectially optimal $\lambda^*_{prev}$ of the previous episode\footnote{Since we have the all the knowledge of the auction information in the experiment data, we are able to derive the theoretically optimal $\lambda^*$.}. Due to the variance of auction environment and campaign settings, the derived $\lambda_0$ may deviate from the theoretically optimal $\lambda^*$ of the current episode. To investigate the performance of each method with different $\lambda$-deviations, quantified by $(\lambda_0 -  \lambda^*)/\lambda^*$, we divide all campaigns into 9 groups according to the $\lambda$-deviation and evaluate the four methods in each group.

The experimental results are summarized in Table \ref{table:2}. DRLB almost outperforms FLB, BSLB and RLB in all 9 groups and the overall improvements over FLB, BSLB and RLB is 100.92\%, 18.33\% and 16.80\% respectively. Moreover, as the $\lambda$-deviation increases, the performance of all baselines degrades enormously while DRLB can still obtain desirable performance. In the cases that the $\lambda$-deviation is small, e.g.[ -20\%, 0\%), all methods can achieve decent performance. In the groups where $\lambda$-deviation is large, DRLB shows particular advantage over the baselines, which indicates its superior adaptability even starting with a improper $\lambda_0$. For instance, when the $\lambda$-deviation lies in [-100\%, -80\%), the average $R/R^*$ of DRLB is 0.878 while that of FLB, BSLB and RLB is 0.436, 0.525 and 0.430 respectively. 

\begin{table*}
\caption{The $\pmb{R/R^*}$ improvements of DRLB over other three methods in 9 groups of $\pmb{\lambda}$ deviation based on Dataset A.}
\centering
\scalebox{0.9}{
    \begin{tabular}{lc|cccc|rrr}
        \toprule
\multirow{2}{*}{$\lambda$ Deviation}  &  \multirow{2}{*}{Campaigns}  &  \multicolumn{4}{c|}{$R/R^*$}  &  \multicolumn{3}{c}{Improvements of $R/R^*$}    \\  
        &  &  {FLB}  &  {BSLB}  &  {RLB} & {DRLB\centering} &  {FLB}  &  {BSLB} &  {RLB} \\
        \hline
$\left[-100\%,-80\%\right)$  &  43  &  0.436  &  0.525 & 0.430  &  {\bfseries 0.878}    &  101.38\%  &  67.24\% & 104.19\%  \\
$\left[-80\%,-40\%\right)$  &  89  &  0.434  &  0.647 & 0.800  &  {\bfseries 0.884}    &  103.69\%  &  36.63\% & 10.50\%  \\
$\left[-40\%,-20\%\right)$  &  66  &  0.697  &  0.901 & 0.927  &  {\bfseries 0.945}    &  35.58\%  &  4.88\% & 1.94\%  \\
$\left[-20\%,0\%\right)$    &  41  &  0.863  &  0.936 & {\bfseries 0.965}  &   0.953    &  10.43\%  &  1.82\% & -1.24\%  \\
$\left[0\%,20\%\right)$  &  39  &  0.825  &  0.925  & 0.944   &  {\bfseries 0.950}  & 15.15\%  &  2.70\% & 0.64\%  \\
$\left[20\%,40\%\right)$  &  48  &  0.491  &  0.947  & 0.895   &  {\bfseries 0.948}  &  93.08\%  &  0.11\% & 5.92\%  \\
$\left[40\%,80\%\right)$  &  85  &  0.391  &  0.904  & 0.832   &  {\bfseries 0.928}  &  137.34\%  &  2.65\% & 11.54\%  \\
$\left[80\%,160\%\right)$  &  57  &  0.307  &  0.813  & 0.709   &  {\bfseries 0.924}  &  200.98\%  &  13.65\% & 30.32\%  \\
$\left[160\%,\infty\right)$  &  32  &  0.291  &  0.668  & 0.618    &  {\bfseries 0.904}   &  210.65\%  &  35.33\% & 46.28\%  \\
        \hline
        Average  &  &  0.526  &  0.807  & 0.791 &  {\bfseries 0.924}  &  100.92\%  &  18.33\% & 16.80\%  \\
        \bottomrule
    \end{tabular}}
    \label{table:2}
\end{table*}

We analyse the experimental results as follows. Because the environment from testing data may deviate from training data heavily, some statistics about the auction environment in the training data, such as budget and market price distribution, may be invalid in the testing data. FLB gives the worst performance since it uses a fixed $\lambda_0$ and is unaware of the dynamics of the auction environment. Similarly, RLB is also insensitive to the variance of the auction environment, because it assumes the market price distribution is stationary and hence the bidding policy is calculated based on this market price distribution.

Both BSLB and DRLB show the ability to cope with the environment changes. However, DRLB is even better than BSLB due to the following two reasons. First, BSLB only takes into consideration budget and time, while DRLB could make full use of auction information to enable accurate $\lambda$ control. Second, according to Eq. \eqref{eq:7}, BSLB is insensitive to the time elapse at the early stages of the day and thus shows limited adaptability to the environment. As for DRLB, it is always able to make timely reaction because the state can represent the environment comprehensively. 

\subsection{Evaluation Results on Dataset B}

We further compare DRLB with RLB on \emph{Dataset B}. The results are shown in Table \ref{table:exp1}. We can see that 5 out of 9 campaigns observe improvements in terms of acquired clicks if the bidding strategy is DRLB. The overall improvement is 4.3\%. We are particularly interested in those campaigns that DRLB performs worse than RLB. A straightforward observation is that DRLB usually performs worse than RLB on campaigns with low AUCs. AUC is a popular indicator to measure the CTR prediction accuracy. A low AUC usually suggests a poor CTR prediction accuracy. Remember that DRLB bids linearly with the predicted impression value, which is quantified by the predicted CTR. Therefore it is not difficult to understand the suboptimal performance of DRLB on campaigns with poor CTR predictions. We argue that improving CTR prediction should be a separate effort (and this is usually the practice in real advertising systems) and the performance of DRLB can be directly improved if the CTR prediction is improved. 



\begin{table}
\caption{Detailed AUC and real clicks for DRLB and RLB (T = 1000 and ${c_0}$ = 1/16) in Dataset B.}
\centering
\scalebox{0.9}{
    \begin{tabular}{c|c|ccr}
        \toprule
Campaign  &  AUC  &  RLB  &  DRLB &  Improvements    \\  
        \hline
1458    & 97.73\% &   473    &    \textbf{474}   &    0.2\% \\
2259    & 67.90\% &    \textbf{23}    &    22   &    -4.3\%  \\
2261    & 62.16\% &    \textbf{17}    &    15   &    -11.8\% \\
2821    & 62.95\% &    \textbf{66}    &    \textbf{66}  &    0\%  \\
2997    & 60.44\% &    \textbf{119}    &    117   &    -1.6\% \\
3358    & 97.58\% &    219    &    \textbf{225}   &    2.7\% \\
3386    & 77.96\% &    109    &    \textbf{134}   &    22.9\% \\
3427    & 97.41\% &    307    &    \textbf{310}   &    1.0\% \\
3476    & 95.84\% &    203    &    \textbf{239}   &    17.7\%  \\
\hline
Overall & -&1536 &\textbf{1602} & 4.3\%\\
        \bottomrule
    \end{tabular}}
    \label{table:exp1}
\end{table}

\subsection{Convergence Comparison with Immediate Reward Function}

In order to probe the agent behaviors with different reward function, i.e. RewardNet and immediate reward, we train two models independently on \emph{Dataset A}. One model deploys RewardNet, while the other uses immediate reward. We dump the models every 10 episodes during the training process, and compare them on the testing dataset. As illustrated in Fig. \ref{fig:reward_net_converge_cmp}, the model with RewardNet yields satisfying $R/R^*$ of 0.893 within a small number of steps, while the model with immediate reward gets stuck in some inferior policy and yields poor $R/R^*$ of 0.418. The experimental results indicate the effectiveness of RewardNet in leading the agent to the optimal policy. 

Furthermore, to represent the reward function design trap, we depict a typical case of the immediate reward distribution, i.e. $r_t/R^*$ over time step $t \in [1, ..., T]$, of agents trained with different reward function, which is shown in Fig. \ref{fig:reward_dist}. The ideal immediate reward distribution derived by the theoretically optimal $\lambda^*$ is also illustrated as a reference. The results demonstrate that the model trained with immediate reward is prone to obtaining more immediate reward in the early steps, which exhausts the budget and results in poor performance from a long-term view. In the case of RewardNet, the reward distribution is similar to that with $\lambda^*$, which shows that RewardNet helps the agent avoid greedy behavior and better utilize the budget for overall benefits.

\subsection{Effectiveness of the Adaptive $\epsilon$-greedy Policy}


Experiments are performed to compare our adaptive $\epsilon$-greedy policy with the original $\epsilon$-greedy policy. We evaluate these policies in settings with two different annealing rates, i.e. $r_\epsilon$=2e-5 and $r_\epsilon$=1e-5 respectively. The results shown in Fig. \ref{fg:4} demonstrate that our adaptive exploration policy helps the agent explore effectively and achieve better performance in both settings. Specially, the adaptive $\epsilon$-greedy exploration enables fast convergence and significantly outperforms the original $\epsilon$-greedy in the setting with the higher annealing rate. This indicates the superior efficiency of our exploration strategy in circumstance where training time is limited, which is common in reinforcement learning problems.

\begin{figure}
    \centering
    \begin{subfigure}[b]{0.35\textwidth}
        \includegraphics[width=\textwidth]{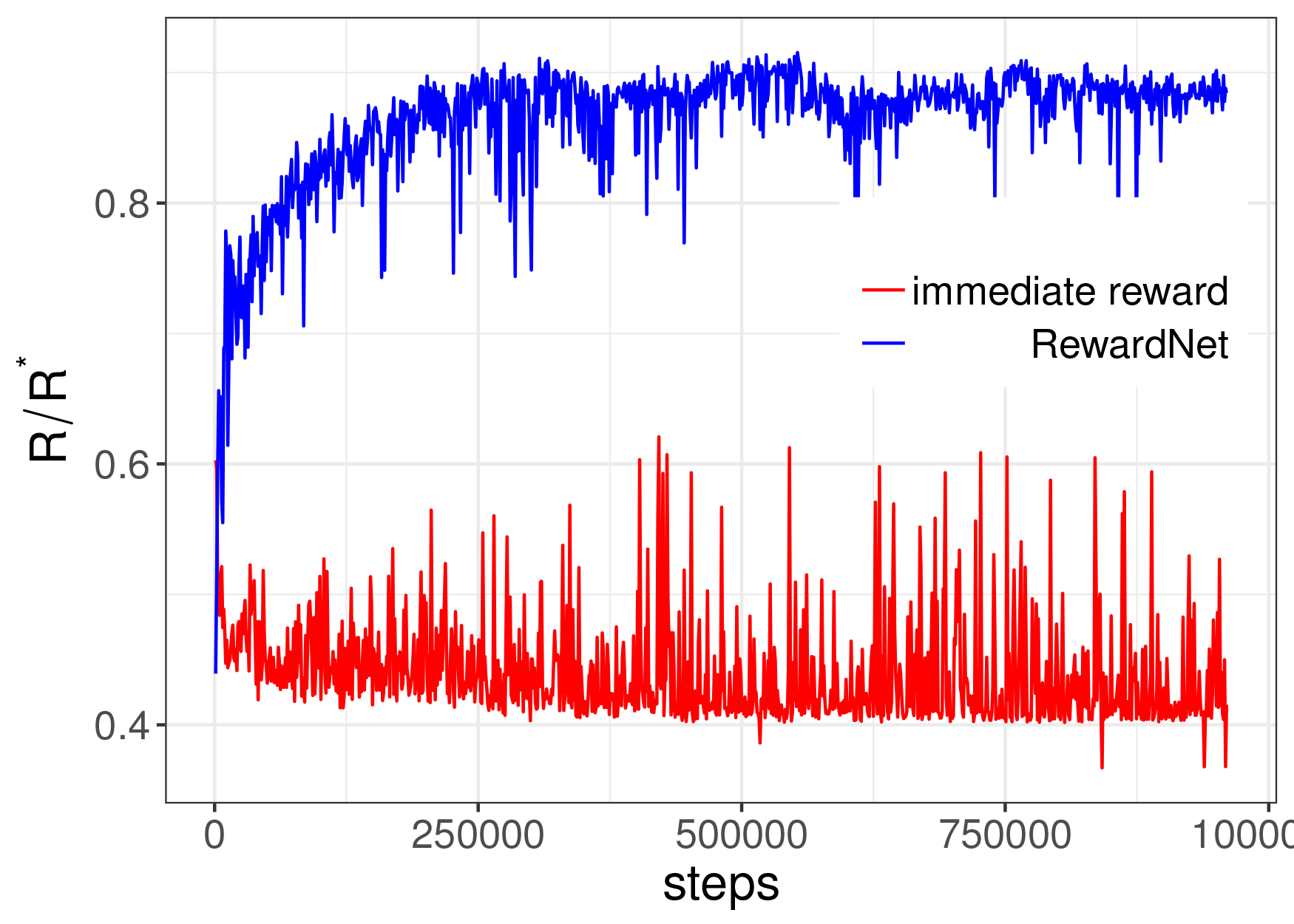}
        \caption{ }

        \label{fig:reward_net_converge_cmp}
    \end{subfigure}
    \begin{subfigure}[b]{0.35\textwidth}
        \includegraphics[width=\textwidth]{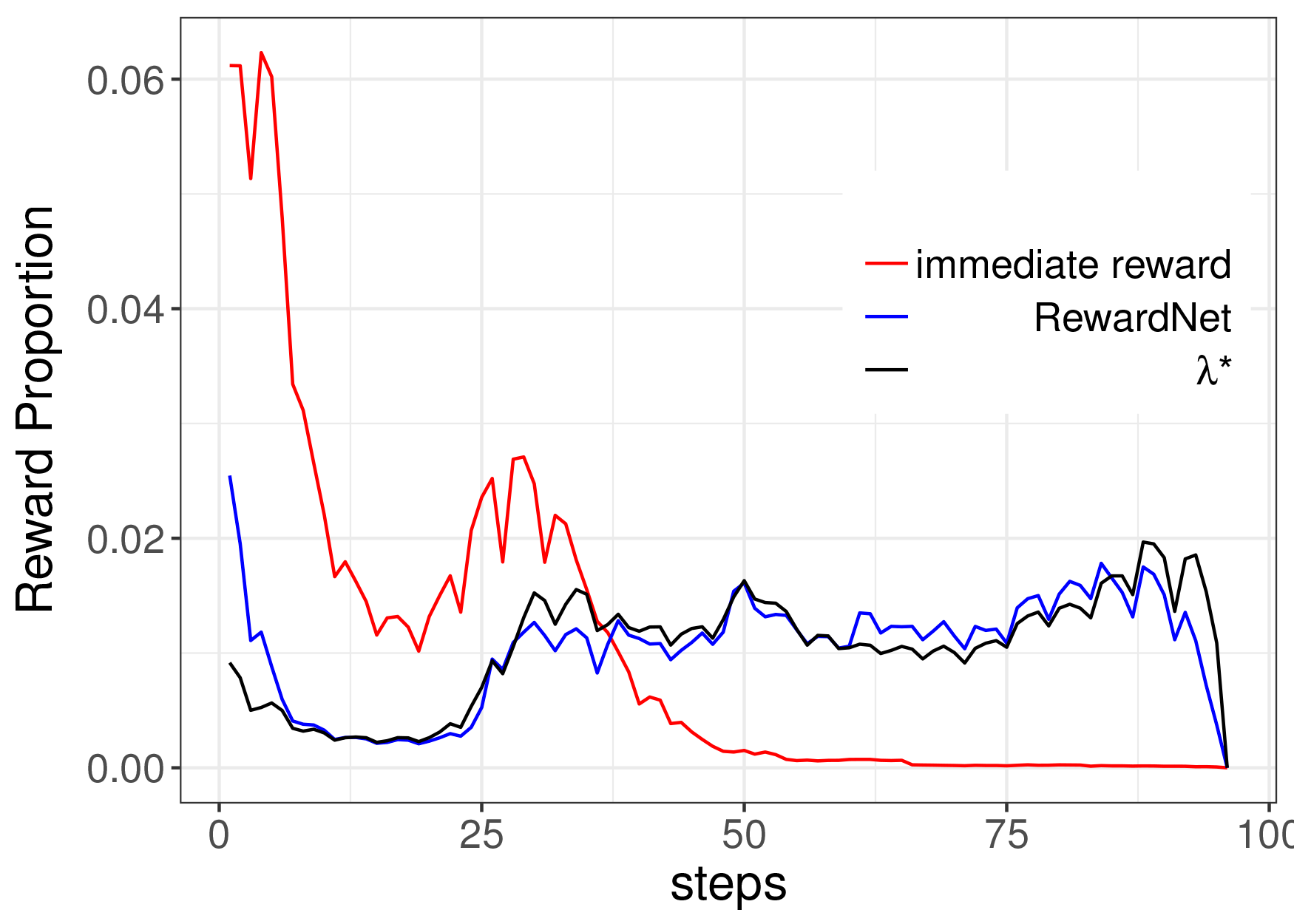}
        \caption{ }

        \label{fig:reward_dist}
    \end{subfigure}
    \caption{Comparison between RewardNet and immediate reward. (a) The $\pmb{R/R^*}$ of two models over steps. (b) Reward distribution of two models along with the ideal one in an episode.}\label{fg:3}
\end{figure}

\begin{figure}
    \centering
    \begin{subfigure}[b]{0.35\textwidth}
        \includegraphics[width=\textwidth]{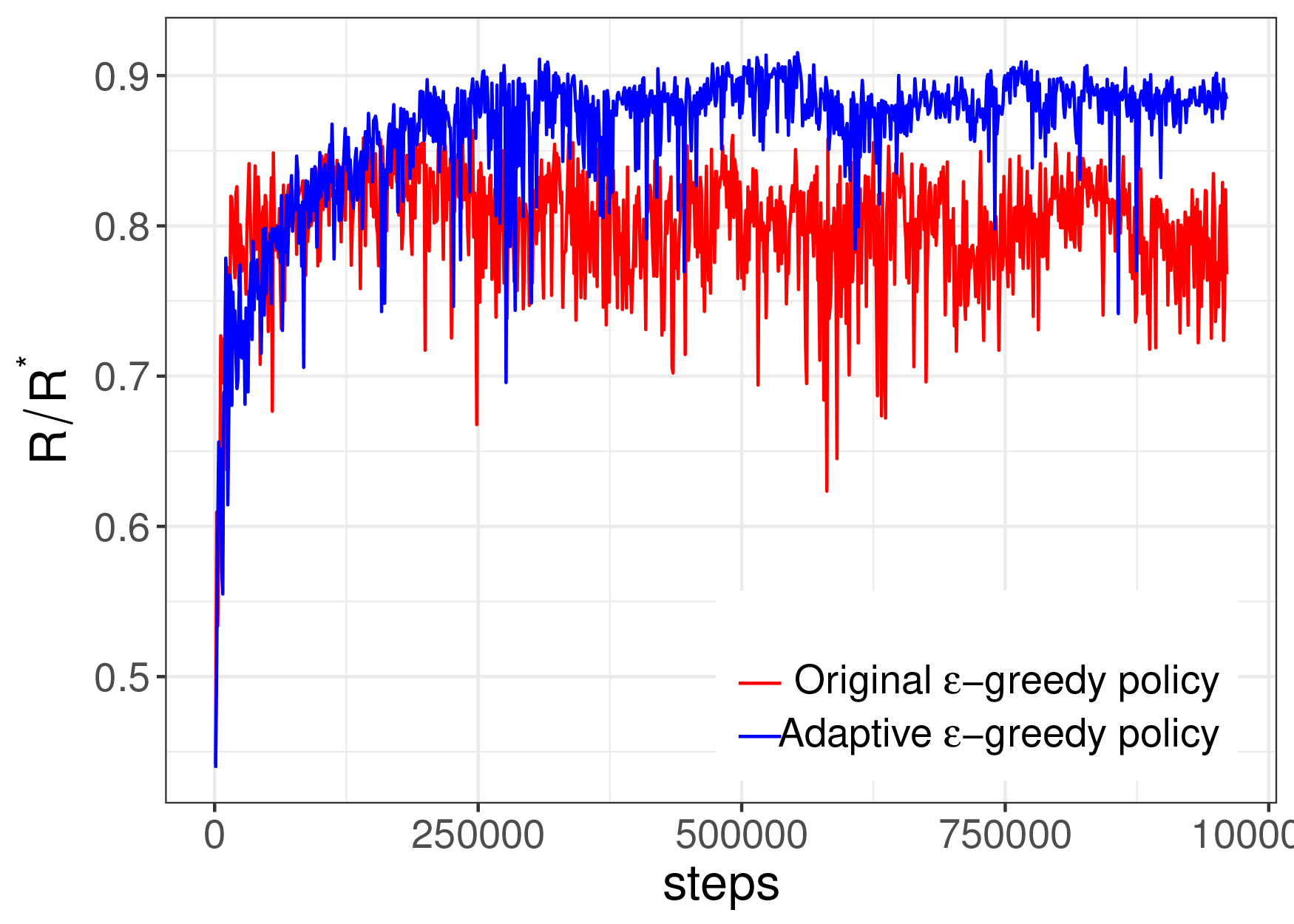}
        \caption{ }
        \label{fig:eg_fast}
    \end{subfigure}
    \begin{subfigure}[b]{0.35\textwidth}
        \includegraphics[width=\textwidth]{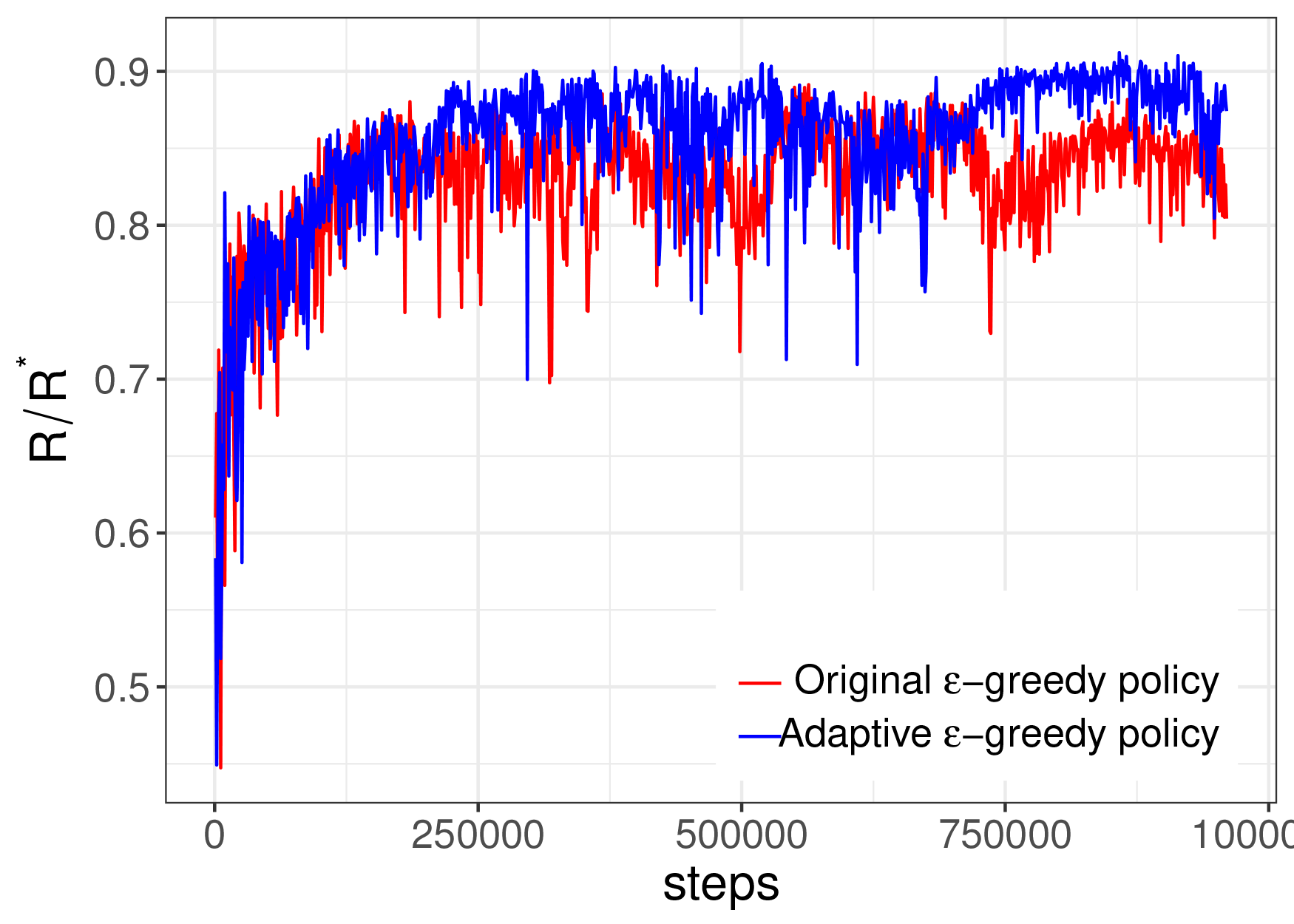}
        \caption{ }
        \label{fig:eg_slow}
    \end{subfigure}
    \caption{Performance of adaptive $\pmb{\epsilon}$-greedy and original $\pmb{\epsilon}$-greedy. (a) $\pmb{r_\epsilon}$=2e-5. (b) $\pmb{r_\epsilon}$=1e-5. }\label{fg:4}
\end{figure}

\section{Related Work}
		In RTB display advertising, there has been some work proposed to estimate impression value, e.g. click-through rate (CTR)\cite{mcmahan2013ad} and conversion rate (CVR)\cite{lee2012estimating}, which helps to bid in the impression level. The optimal strategy of advertisers is to bid truthfully according to the estimated impression value under the second price auction\cite{krishna2009auction}. However, truthful bidding may deliver poor performance considering repeated auctions and budget constraints in real-world applications\cite{zhang2014optimal}. For instance, an advertiser may run out of the budget so early in a day and miss the potentially valuable impressions afterwards by thuthful bidding. Perlich et al. \cite{perlich2012bid}, Zhang et al. \cite{zhang2014optimal} and Cai et al. \cite{cai2017real} proposed to optimize the bidding strategy under budget constraints to maximize the accumulated impression value on behalf of advertisers in display advertising scenario. Static bid optimization frameworks proposed in \cite{perlich2012bid} and \cite{zhang2014optimal} set bids according to the static distribution of input data, which cannot work well when the real data distribution deviates from the assumed one. Cai et al. \cite{cai2017real} proposed a reinforcement learning approach that shows robustness to the non-stationary auction environment, which shares some common thoughts with our work. We both formulated the bidding process as a reinforcement learning problem. However, Cai et al. \cite{cai2017real} modeled the state transition via auction competition and derived the optimal policy to bid for each impression on a model-based MDP, which leads to massive computations when datasets get large. In our work, we transformed the original bidding process to $\lambda$ regulating, and proposed the model-free MDP to derive the optimal policy for bidding. 

There is also some work addressing bidding problem in situations different from ours. Amin et al. \cite{amin2012budget} and Yuan et al. \cite{yuan2012sequential} proposed model-based MDPs to set bids in sponsored search, where the decision is made on key-word level. Ghosh et al. \cite{ghosh2009adaptive} optimized the bidding strategy to guarantee a given number of impressions with a given budget. Approaches proposed in  \cite{lee2013real}, \cite{borgs2007dynamics}, \cite{abrams2007optimal} and \cite{mehta2007adwords} adjust the pre-set bid to smooth the budget spending, which helps advertisers to reach a wider range of audience accessible throughout a day and have a sustainable impact. Moreover, some previous work provide insights on the bidding mechanism design for the advertising platform\cite{balseiro2017budget,balseiro2017learning,conitzer2017multiplicative}, while our work focuses on the benefits of the advertisers and aims to optimize their bidding results.

\section{Conclusion}

In this paper, we propose a model-free deep reinforcement learning method to solve the budget constrained bidding problem in RTB display advertising. The bidding problem is innovatively formulated as a $\lambda$ control problem based on linear bidding equation. To solve the reward design trap, which makes the agent hard to converge to the optimum, we design RewardNet to generate reward instead of using the immediate reward. Furthermore, the problem of insufficient exploration is also alleviated by dynamically changing the random probability of the original $\epsilon$-greedy policy.  The experiments upon the real-world dataset show that our model converges quickly and significantly outperforms the widely used bidding methods. Last but not least, the idea of RewardNet is general, which can be applied to other deterministic MDP problems, especially for those aiming to maximize long-term result when the reward function is hard to design.

\bibliographystyle{ACM-Reference-Format}
\balance
\bibliography{sample-bibliography}

\end{document}